
\newcommand{\MJ}[1]{{\color{red} MJ: #1}} 
\newcommand{\RC}[1]{{\color{blue} RC: #1}} 

\documentclass[letterpaper, 10 pt, conference]{ieeeconf}  

\IEEEoverridecommandlockouts                              
\overrideIEEEmargins
\parskip = .3ex


\usepackage[font={footnotesize},tablename=Table]{caption}
\usepackage{comment}
\usepackage{color}
\usepackage{cite}
\usepackage{float}

\usepackage{graphics} 
\usepackage{epsfig} 
\usepackage{mathptmx} 
\usepackage{times} 
\usepackage{amsmath} 
\usepackage{amssymb}  
\usepackage{tabularx}
\newtheorem{theorem}{\textbf{\large{Theorem}}}
\newtheorem{assumption}{\textbf{Assumption}}
\newtheorem{lemma}[theorem]{\textbf{Lemma}}
\newtheorem{definition}{\textbf{Definition}}
\newtheorem{remark}[theorem]{\textbf{Remark}}
\newcommand{\oprocendsymbol}{\hbox{$\bullet$}}
\newcommand{\oprocend}{\relax\ifmmode\else\unskip\hfill\fi\oprocendsymbol}

\usepackage{array}
{\renewcommand{\arraystretch}{1.4}}%
\newcolumntype{P}[1]{>{\centering\arraybackslash}p{#1}}

\newcommand{\red}[1]{{\color{red} #1}}
\newcommand{\prl}[1]{\left(#1\right)} 
\newcommand{\brl}[1]{\left[#1\right]}

\DeclareMathOperator{\tr}{\mathbf{tr}}

\title{\LARGE \bf
Safe Multi-Agent Interaction through Robust Control Barrier Functions with Learned Uncertainties
}


\newcommand{\calM}{{\cal M}}
\newcommand{\calN}{{\cal N}}



\newcommand{\bfu}{\mathbf{u}}

\newcommand{\bfx}{\mathbf{x}}

\newcommand{\bfA}{\mathbf{A}}
\newcommand{\bfB}{\mathbf{B}}

\newcommand{\bfM}{\mathbf{M}}

\newcommand{\bfX}{\mathbf{X}}
\newcommand{\bfY}{\mathbf{Y}}


\author{Richard Cheng, Mohammad Javad Khojasteh, Aaron D. Ames, and Joel W. Burdick
\thanks{R.~Cheng, A.~D.~Ames, and J.~W.~Burdick are with the
Department of Mechanical and Civil Engineering,
California Institute of
    Technology.
M.~J.~Khojasteh is with
    the Department of Electrical Engineering, California Institute of
    Technology.
        { \{rcheng,mjkhojas,ames,jwb\}@caltech.edu}}%
}

\begin{document}
\maketitle
\begin{abstract}
Robots operating in real world settings must navigate and maintain safety while interacting with many heterogeneous agents and obstacles. Multi-Agent Control Barrier Functions (CBF) have emerged as a computationally efficient tool to guarantee safety in multi-agent environments, but they assume perfect knowledge of both the robot dynamics and other agents' dynamics. While knowledge of the robot's dynamics might be reasonably well known, the heterogeneity of agents in real-world environments means there will \textit{always} be considerable uncertainty in our prediction of other agents' dynamics. This work aims to learn high-confidence bounds for these dynamic uncertainties using Matrix-Variate Gaussian Process models, and incorporates them into a robust multi-agent CBF framework. We transform the resulting min-max robust CBF into a quadratic program, which can be efficiently solved in real time. We verify via simulation results that the nominal multi-agent CBF is often violated during agent interactions, whereas our robust formulation maintains safety with a much higher probability and adapts to learned uncertainties.
\end{abstract}

\section{INTRODUCTION}
Collision-free robot navigation in natural multi-agent environments is vital for a myriad of robotic applications, such as self-driving cars, navigation in crowds, etc. However, placing robots in rapidly evolving, uncertain environments introduces many challenges in guaranteeing safety~\cite{Safe_mbrl_nips17,fisac2018general,learning_safe_mpc,wabersich2018safe,janson2018safe}. Uncertainty in the prediction of other agents' trajectories is inevitable, and robots should learn and account for this uncertainty to ensure safe operation. 

In this work, we utilize the Multi-Agent Control Barrier Function (CBF) proposed in \cite{Borrmann2015} to generate  low-level controllers that guarantee collision-free behavior. The Multi-Agent CBF uses an optimization-based controller to  prevent  the  robot  from  entering  unsafe sets (i.e. states leading to inevitable collision) in a \textit{minimally invasive} fashion. However, previous work either assumes that the robot dynamics and the other agents' dynamics are perfectly known, or considers highly conservative worst-case bounds \cite{Wang2016,Singletary2020}. It is clear though that all natural environments (e.g. settings with humans) are fraught with varying levels of uncertainty. For example, human trajectories remain notoriously difficult to predict and are highly stochastic \cite{Bartoli2018}. Thus, both capturing and accounting for uncertainty is crucial for safe navigation. 
   \begin{figure}[ht]
        \begin{center}            
        \includegraphics[width=0.47\textwidth]{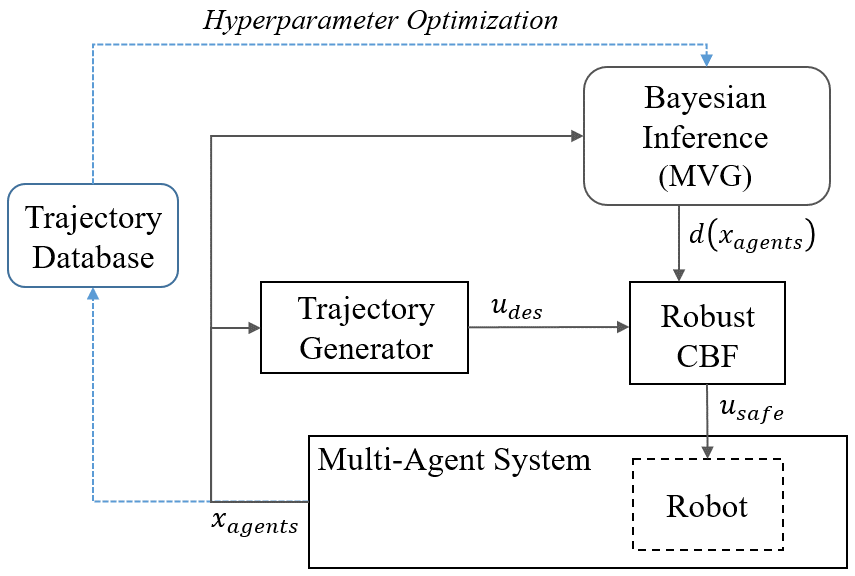}
        \end{center}
        \caption{~Diagram overviewing the control structure. Our approach guarantees safety by utilizing a Bayesian Inference Module to learn dynamic uncertainties, and handles them with our proposed Robust CBF module.}
        \label{fig:block_diagram}
    \end{figure}

    \begin{figure*}[t]
        \begin{center}            
        \includegraphics[width=0.93\textwidth]
        {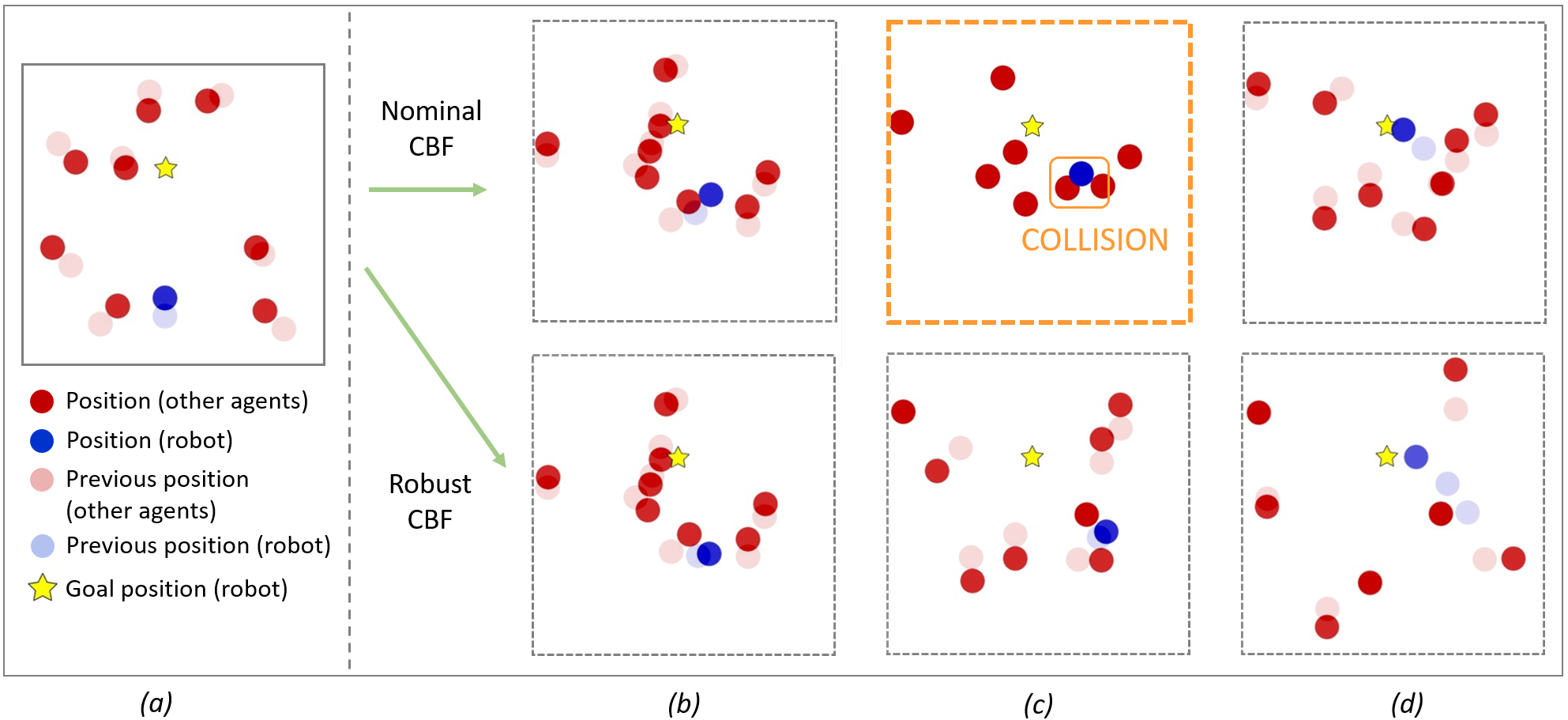}
        \end{center}
        \caption{~Sample path of a multi-agent system based on the nominal CBF (cf.~\cite{Borrmann2015}) and our proposed Robust CBF. The robot (blue) tries to navigate from a start position to random goal position while avoiding collisions with other agents (red). Approximately half of the other agents blindly travel towards their own randomly chosen goal, while the rest exhibit varying degrees of collision-avoidance behavior (the robot does not know their behavior apriori). For more details and results of simulations, see~Sec.~\ref{sec:simulnavig}. \textbf{(a)} Initial robot/environment configuration, \textbf{(b)}  Intermediate configuration, \textbf{(c)} Intermediate configuration showing that the nominal CBF controller experiences collision \textit{(top)}, while the robust CBF avoids collision \textit{(bottom)}. \textbf{(d)} Final configuration before robot reaches its goal position (star). See \texttt{https://youtu.be/hXg5kZO86Lw} for the simulation videos.}
        \label{fig:markov_chain_fig}
    \end{figure*}
\noindent
\textbf{Statement of contributions:} The goal  of  this  paper  is  to (1) learn uncertainty bounds \textit{online} from  agents' observed trajectories, and (2) incorporate those uncertainty bounds into the Multi-Agent CBF while maintaining computational efficiency of the underlying controller (i.e. a quadratic program). 

Our approach focuses first on learning high-confidence polytopic bounds on the, \textit{possibly coupled}, uncertainties in both the robot dynamics and other agents' dynamics. To achieve this, we utilize Matrix-Variate Gaussian Processes (MVG) and optimize their hyperparameters \textit{offline} from interaction data; this allows us to predict ellipsoidal uncertainty in our dynamics \textit{online}, which we convert to an uncertainty polytope given a desired confidence level. Using these polytopic bounds, we formulate a robust CBF as a min-max optimization problem over the robot controls and the potential uncertainties, respectively. We then transform this min-max problem into a quadratic program that can be efficiently solved to find a safe control action that is robust with respect to our estimated uncertainty. See Fig.~\ref{fig:block_diagram} for an overview of our approach.

\smallskip
\noindent
\textbf{Organization:} Sec.~\ref{sec:2} goes over related work in the safe collision avoidance literature, Sec.~\ref{sec:3} provides background information on CBFs used to guarantee safety, and Matrix-Variate Gaussian Processes used to estimate correlated uncertainties. Sect.~\ref{secrefiVw} introduces the robust multi-agent CBF and shows how it can be efficiently solved under given uncertainty bounds. Sec.~\ref{sec:5} looks at how to learn these confidence bounds over the uncertainty. Finally, Sec.~\ref{sec:6} presents simulation results illustrating the benefits of our algorithm and verifying the safety of our controller. 

\section{RELATED WORK}
\label{sec:2}

Multi-agent collision avoidance has been a long-studied problem with different approaches proposed for enabling safe control in varying situations. Velocity obstacles is a popular approach that involves limiting control actions to a set of ``safe'' actions, though its constant velocity with linear dynamics assumption is limiting~\cite{Fiorini1998,VanBerg2008a}. Related works in this direction have loosened these assumptions, but require significant sampling of the action space and do not incorporate dynamic uncertainty~\cite{Wilkie2009,Bareiss2015}. More recently, Buffered Voronoi Cells (BVC) have been proposed as a tool to provide safety guarantees with only positional information, though safety guarantees are provided only under linear dynamics and without uncertainty~\cite{Zhou2017,Wang2018}. Also,~\cite{Fisac2015,Chen2018,Bansal2018} provide safety guarantees, under worst case disturbances, by solving the  Hamilton-Jacobi-Isaacs equation to obtain a minimally invasive control law. However, the heavy computational expense prohibits applicability to large-scale multi-agent systems. 

Reinforcement learning methods have emerged recently, which directly learn actions in multi-agent settings in response to the observed environment \cite{Chen2017,Everett2018}. However, these methods provide no formal guarantees of safety, and as such are prone to collision in novel environments. Furthermore, they have not been shown to scale to settings with many agents. To ensure safety in the reinforcement learning settings, the work~\cite{2019arXiv191012639Z} combined a safe backup policy with the learned policies. \cite{Cheng2019} incorporated safety constraints into the reinforcement learning framework, although this work considered only \textit{decoupled} uncertainties and was limited to polytopic safety constraints. Bayesian inference was utilized in~\cite{wang2018safe,khojasteh2019probabilistic,fan2019bayesian} to learn system dynamics in an online manner while ensuring safety (with high probability) using Control Barrier Functions for a single agent.

Control Barrier Functions (CBF) are a tool for enforcing set invariance of dynamical systems \cite{Ames2014,Ames2017,nguyen2016exponential}, and they have been used to guarantee collision avoidance in multiagent settings by projecting desired actions, to the closest (in least-squares sense) safe actions according to a CBF condition \cite{clark2019control,yang2019self,srinivasan2019weighted}. However, defining a valid CBF for general systems remains a challenge, especially in cases with significant uncertainty. Recent works have looked at learning CBFs for general systems, either implicity or explicitly \cite{Gurriet2019,Srinivasan2020}. A multi-agent CBF was defined explicitly for multi-agent systems in the case of continuous-time linear dynamics \cite{Borrmann2015,Wang2016}. However, this multi-agent CBF does not incorporate uncertainty or nonlinearity in the dynamics. Other work on robust CBFs deals only with highly conservative worst-case bounds \cite{Singletary2020}. We seek to address this gap in this paper.

Our work builds upon the current literature by computing the minimally invasive action necessary to maintain safety in a computationally efficient manner \textit{for discrete-time systems while incorporating nonlinear dynamics and learned uncertainty}.

\medskip
\section{BACKGROUND}
\label{sec:3}
Our robotic system is represented by nonlinear control-affine dynamics in discrete-time:
\begin{equation}
\begin{split}
& x_{t+1}
=
\begin{bmatrix}
p_{t+1}  \\
v_{t+1}  \\
z_{t+1} \\
\end{bmatrix}
=
\underbrace{
\begin{bmatrix}
f_p(x_t)  \\
f_v(x_t)  \\
f_z(x_t) \\
\end{bmatrix}
}_{f(x_t)}
+
\underbrace{
\begin{bmatrix}
g_p(x_t)  \\
g_v(x_t)  \\
g_z(x_t) \\
\end{bmatrix}
}_{g(x_t)}
u
+
\underbrace{
\begin{bmatrix}
d_p(x_t)  \\
d_v(x_t)  \\
d_z(x_t)  \\
\end{bmatrix}
}_{d(x_t)}
, \\
\end{split}
\label{eq:transition_dynamics}
\end{equation}
\noindent
where $p \in \mathbb{R}^2$, $v \in \mathbb{R}^2$, and $z \in \mathbb{R}^{n-4}$ denote position, velocity and other states, respectively. Here, $f_{j}$, $g_{j}$, and $d_{j}$ are real-valued functions, for $j \in \{p,v,z\}$, and $u \in \mathcal{U}$ where $\mathcal{U} := \{u \in \mathbb{R}^m : \| u \|_2 \leq u_{max} \} $. 
The functions $f(x)$ and $g(x)$ are assumed to be \textit{known}, whereas $d(x)$ represents \textit{unknown} uncertainty in the dynamics, which we model with a Gaussian process [cf.~Sec.~\ref{sec:323}]. 
We assume that this system has relative degree 2 with respect to the positional output $p$; in discrete time, this directly implies that $g_p(x) = 0_{2 \times 2}$.
Similarly, let us represent the other agents within our multi-agent system with dynamics,
 \begin{equation}
 x^{(i)}_{t+1}
 =
 \begin{bmatrix}
 p^{(i)}_{t+1}  \\
 v^{(i)}_{t+1}  \\
 z^{(i)}_{t+1}  \\
 \end{bmatrix}
 =
 \underbrace{
 \begin{bmatrix}
 f^{(i)}_p(x_t)  \\
 f^{(i)}_v(x_t)  \\
 f^{(i)}_z(x_t)  \\
 \end{bmatrix}
 }_{f^{(i)}(x_t)}
 +
 \underbrace{
 \begin{bmatrix}
 d^{(i)}_p(x_t)  \\
 d^{(i)}_v(x_t)  \\
 d^{(i)}_z(x_t)  \\
 \end{bmatrix}
  }_{d^{(i)}(x_t)}
 ,
 \label{eq:dynamics_agents}
 \end{equation}
\noindent
where $i \in \mathbb{N}$ indexes each of the other agents in our system. We assume the control input for other agents are a (uncertain) function of their state at the given time, so we do not show control inputs explicitly in~\eqref{eq:dynamics_agents}.

As our robot is interacting with other unknown agents, it will be important for us to account for the uncertainties, $d, d^{(i)}$, when considering safety via CBFs. For the rest of the paper, we assume that we perfectly observe each agent's current state $x_t$, but do not know (i.e. can only estimate) their uncertain dynamics $d$ and  $d^{(i)}$. 

\subsection{Control Barrier Functions}

Consider a safe set, $\mathcal{C}$, defined by the super-level set of a continuously differentiable
function $h: \mathbb{R}^n \rightarrow \mathbb{R}$:
\begin{equation}
\begin{split}
&  \mathcal{C} := \{x \in \mathbb{R}^n: h(x) \geq 0 \}.
\end{split}
\label{eq:safe_set}
\end{equation}
To maintain safety during the learning process, the system state must always remain within the safe set $\mathcal{C}$ (i.e. the set $\mathcal{C}$ is \textit{forward invariant} with respect to the system dynamics). A set $\mathcal{C}$ is forward invariant if for every $x_0 \in \mathcal{C}$, $x(t) \in \mathcal{C}$ for all $t \geq 0$. In our multi-agent setting, this set could include all states where collision can be avoided given the robot's input bounds. \textit{Control barrier functions} utilize a Lyapunov-like argument to provide a sufficient condition for ensuring forward invariance of the safe set $\mathcal{C}$ under controlled dynamics.
\smallskip
\begin{definition}
	Given a set $\mathcal{C} \in \mathbb{R}^n$ defined by (\ref{eq:safe_set}), the continuously differentiable function $h: \mathbb{R}^n \rightarrow \mathbb{R}$ is a \textit{discrete-time control barrier function} (CBF) for dynamical system (\ref{eq:transition_dynamics}) if there exists $\eta \in [0,1]$  such that for all $x_t \in \mathcal{C}$, 
	\begin{equation}
	\sup_{u_t \in \mathcal{U}} \mbox{CBC}(x_t, u_t) 
	\geq 0 ,
	\label{eq:discrete_CBF}
	\end{equation}
\end{definition} 
\noindent
for $CBC(x_t, u_t):=h(x_{t+1}(u_t)) + (\eta - 1) h_t(x_t)$, where CBC stands for control barrier condition.  

\smallskip
If a function $h(x)$ is a CBF, then there exists a controller such that the set $\mathcal{C}$ is forward invariant \cite{Agrawal2017,Ames2017}. In other words, system safety is guaranteed by ensuring satisfaction of condition (\ref{eq:discrete_CBF}). Our goal is to ensure safety, by computing a minimally invasive control action that satisfies (\ref{eq:discrete_CBF}), in an online fashion. In particular, we utilize the following multi-agent CBF inspired by \cite{Borrmann2015}:
\begin{equation}
\begin{split}
& h(x) := \frac{\Delta p^T \Delta v}{\| \Delta p\|} + \sqrt{a_{max} (\| \Delta p\| - D_s)} ~ , \\
\end{split}
\label{eq:MA_CBF}
\end{equation}

\noindent
where $a_{max}$ represents our robot's max acceleration in the collision direction, $D_s$ is the collision margin, $\Delta p = p - p^{(i)}$ is the positional difference between the agents, and $\Delta v = v - v^{(i)}$ is the velocity difference between the agents. 

The work~\cite{Borrmann2015} introduced a CBF similar to~\eqref{eq:MA_CBF} for \textit{continuous-time linear} systems, such that $a_{max}$ can be determined easily. In Sec.~\ref{secrefiVw}, under an appropriate assumption, we show that \eqref{eq:MA_CBF} is a valid CBF for the discrete-time nonlinear dynamics \eqref{eq:transition_dynamics} and \eqref{eq:dynamics_agents}. 

\subsection{Matrix-Variate Gaussian Process Regression}
\label{sec:323}
Here we illustrate how Bayesian learning can be used to acquire a distribution over the uncertainty in the dynamics. Since we are estimating a multivariate uncertainty $d(x)$, we must consider potential correlations in its components. Thus, 
 we use the Matrix-Variate Gaussian Process (MVG) model to learn the system dynamics and uncertainty from data. By learning $\mu_d(x)$ and $\Sigma_d(x)$ in tandem with the controller, we can obtain high probability confidence intervals on the unknown dynamics, which adapt/shrink as we obtain more information (i.e. measurements) on the system. We first start by defining the MVG distribution~\cite{khojasteh2019probabilistic,gupta2018matrix,louizos2016structured,StructuredPBP} as follows.
 \smallskip
\begin{definition}
We say the random matrix $\bfX \in \mathbb{R}^{N \times n}$ is distributed according to a MVG distribution when its probability density function is defined as:
\begin{align}
\label{eq:MVGP}
\begin{aligned}
p(\bfX; \bfM, \Sigma, \Omega) := \frac{\exp\prl{ -\frac{1}{2}\tr\brl{\Omega^{-1}(\bfX-\bfM)^T \Sigma^{-1}(\bfX-\bfM)}}}{(2\pi)^{Nn/2}\det(\Sigma)^{n/2}\det(\Omega)^{N/2}},
\end{aligned}
\end{align}
where $\bfM \in \mathbb{R}^{N \times n}$ denotes the mean, and  $\Sigma \in \mathbb{R}^{N \times N}$ encodes the covariance matrix of the rows, and $\Omega \in \mathbb{R}^{n \times n}$ encodes the covariance matrix of the columns.
In this case, we write $\bfX \sim \calM\calN(\bfM,\Sigma,\Omega)$, and we have $\textit{vec}(X) \sim \mathcal{N}(\textit{vec}(\bfM), \Sigma  \otimes \Omega)$, where $\textit{vec}(\bfX) \in \mathbb{R}^{Nn}$ is the vectorization of $\bfX$, obtained by stacking the columns of $X$, and $\otimes$ is the Kronecker product. 
\end{definition}

\smallskip
We continue by modeling $d(x)$ as a MVG on $\mathbb{R}^n$. Without loss of generality, we assume zero mean for the MVG with positive semi-definite parameter covariance matrix $\Omega \in \mathbb{R}^{m\times m}$, and kernel $\kappa:\mathbb{R}^n \times \mathbb{R}^n \rightarrow \mathbb{R}$. That is,
\begin{align*}
    \textit{vec}(d(x_1), \ldots, d(x_N))\sim \mathcal{N}(\textbf{0},~\Sigma(x) \otimes \Omega),
\end{align*}
where $\Sigma \in \mathbb{R}^{N \times N}$ with $\Sigma_{i,j}=\kappa(x_i,x_j)$. There are many potential choices for the kernel function $\kappa(x_i, x_j)$ (cf.~\cite{williams2006gaussian}), though we utilize the simple squared-exponential kernel in this work,
\begin{equation}
    \kappa(x_i, x_j) = \sigma^2 \exp{\Big(\frac{-\|x_i - x_j\|^2}{2l^2} \Big)} ~ ,
\end{equation}
\noindent
where $\sigma$ and $l$ are kernel hyperparameters. Since $d(x)$ is an MVG, the training observations $y_{[N]}:=[d(x_1), \ldots, d(x_N)]^{T}$ at sampling points $x_{[N]}:=[x_1, \ldots,x_N]$, and the predictive target $d(x_*)$ at query test point, $x_*$, are jointly Gaussian as 
\begin{align*}
    \begin{bmatrix}
 y_{[N]}  \\
  d(x_*) \\
 \end{bmatrix}
 \sim \calM\calN\left(\textbf{0},\begin{bmatrix}
 K(x_{[N]},x_{[N]})  &  K(x_{*},x_{[N]}) \\
   K(x_{*},x_{[N]}) & \kappa(x_{*},x_{*}) \\
 \end{bmatrix},\Omega \right),
\end{align*}
where $K(x_{[N]},x_{[N]}) \in \mathbb{R}^{N \times N}$ with $[K(x_{[N]},x_{[N]})]_{i,j}=\kappa(x_i,x_j)$, and $K(x_{*},x_{[N]}) \in \mathbb{R}^{1 \times N}$ with $[K(x_{*},x_{[N]})]_{i}=\kappa(x_*,x_i)$.  
Thus, we can compute the posterior distribution as follows:
\begin{equation}
\begin{split}
    & \textit{vec}(d(x_*)) \sim \mathcal{N} \big( \hat{M} ~ , ~ \hat{\Sigma} \otimes \hat{\Omega} \big) \\
    & ~~~~~~~~\hat{M} = K(x_*, x_{[N]})^T K(x_{[N]}, x_{[N]})^{-1} y_{[N]} \\
    & ~~~~~~~~\hat{\Sigma} = \kappa(x_*, x_*) - K(x_*, x_{[N]})^T  K(x_{[N]}, x_{[N]})^{-1} K(x_*, x_{[N]}) \\
    & ~~~~~~~~\hat{\Omega} = \Omega
    \label{eq:GP_inf}
\end{split}
\end{equation}
\noindent
This allows us to estimate our unknown dynamics and their possibly correlated uncertainties.

\medskip
\section{ROBUST MULTI-AGENT CBF}
\label{secrefiVw}
In this section, we first show that under certain assumptions,~\eqref{eq:MA_CBF} is a multi-agent CBF for our discrete-time nonlinear dynamics. Then, given bounds on the uncertainty in each agents' dynamics, we incorporate robustness to these uncertainties into the CBF while maintaining the computational efficiency of a quadratic program. 

~

\noindent
\textbf{Extending Multi-Agent CBF to Discrete-Time, Nonlinear Systems:} The multi-agent CBF introduced in \cite{Borrmann2015} was originally designed for \textit{continuous-time linear} systems. However, we prove that under proper assumption, $h(x)$ defined in \eqref{eq:MA_CBF} is a discrete-time CBF for the \textit{discrete-time nonlinear} system \eqref{eq:transition_dynamics}/\eqref{eq:dynamics_agents}. The tradeoff is the additional conservativeness in $a_{max}$ introduced by the following assumption. Intuitively, this assumption ensures that the robot can accelerate in any direction relative to the other agents, as proved in Lemma \ref{lemma:accel}. 
\medskip
\begin{assumption}
Assume that for all $x \in \mathcal{C}$, $g_v(x)$ is invertible and $\frac{\|\beta_v(x)\|}{\sigma_{min}(g_v(x))  u_{max}} < 1$, where $\beta_v(x) = f_v(x) + d_v(x) - f_v^{(i)}(x) - d_v^{(i)}(x) - \Delta v_t$ and $\sigma_{min}(g_v(x))$ is the minimum singular value of $g_v(x)$. 
\label{assumption:act}
\end{assumption}
\medskip
\begin{remark}
This assumption ensures controllability and places restrictions on our agent's dynamics with relation to its actuator authority. If $u_{max}$ is large, the restriction is minimal/non-existent, and vice-versa. As a simple example, a car at rest would \textit{not} satisfy this assumption, though a moving car \textit{would} likely satisfy this assumption (with a higher velocity corresponding to larger $a_{max}$). \oprocend
\end{remark}
\medskip
\begin{lemma}
\textit{Under Assumption \ref{assumption:act}, which places controllability restrictions on the dynamics, the expression (\ref{eq:MA_CBF}), defining set $\mathcal{C}$, represents a discrete-time CBF for system (\ref{eq:transition_dynamics}), with} 
\begin{equation}
a_{max} = \min_{x} \Big[ \sigma_{min}(g_v(x)) u_{max} - \| \beta_v(x) \| \Big] > 0.
\end{equation}
\label{lemma:accel}
\end{lemma}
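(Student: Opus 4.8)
The plan is to verify the discrete-time CBF definition directly: exhibit an $\eta \in [0,1]$ for which $\sup_{u \in \mathcal{U}} \mathrm{CBC}(x_t,u) \geq 0$ holds on all of $\mathcal{C}$, and to extract along the way that the stated $a_{max}$ is both well-defined and strictly positive. The first step is to exploit the relative-degree-2 structure. Since $g_p(x)=0$ and the other agents carry no control input, the quantity $\Delta p_{t+1} = f_p(x_t)+d_p(x_t)-f_p^{(i)}(x_t)-d_p^{(i)}(x_t)$ is independent of $u$. Hence $\|\Delta p_{t+1}\|$, the unit vector $\hat n_{t+1} := \Delta p_{t+1}/\|\Delta p_{t+1}\|$, and the braking term $\sqrt{a_{max}(\|\Delta p_{t+1}\|-D_s)}$ are all $u$-independent, so the only $u$-dependence of $h(x_{t+1})$ lives in the closing-velocity term $\hat n_{t+1}^T \Delta v_{t+1}$, and maximizing $h(x_{t+1})$ over $u$ reduces to maximizing that scalar.

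Next I would write the relative-velocity update as $\Delta v_{t+1} = \Delta v_t + \beta_v(x_t) + g_v(x_t)u$, which exhibits $\beta_v + g_v u$ as the one-step relative acceleration. By Cauchy--Schwarz, $\sup_{\|u\|\le u_{max}} \hat n_{t+1}^T g_v u = u_{max}\|g_v^T \hat n_{t+1}\| \ge u_{max}\,\sigma_{min}(g_v)$, while $\hat n_{t+1}^T \beta_v \ge -\|\beta_v\|$. Combining these gives $\sup_{u\in\mathcal{U}} \hat n_{t+1}^T(\beta_v + g_v u) \ge \sigma_{min}(g_v)u_{max}-\|\beta_v\| \ge a_{max}$. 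Under Assumption~\ref{assumption:act}, each term $\sigma_{min}(g_v)u_{max}-\|\beta_v\|$ is strictly positive, so the minimum defining $a_{max}$ is positive; this simultaneously establishes $a_{max}>0$ and the acceleration claim, namely that maximal braking realizes at least $a_{max}$ of relative deceleration in the collision direction $\hat n_{t+1}$ (this is the statement that was previously flagged as $\sup_{u}(\Delta v_{t+1}-\Delta v_t)\cdot\hat e > 0$).

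Finally I would close the CBF condition. Using the maximizing control $u^\star = u_{max}\,g_v^T\hat n_{t+1}/\|g_v^T\hat n_{t+1}\|$, the previous bound yields $\sup_u h(x_{t+1}) \ge \hat n_{t+1}^T\Delta v_t + a_{max} + \sqrt{a_{max}(\|\Delta p_{t+1}\|-D_s)}$, and it remains to dominate $(1-\eta)h(x_t)$ for a suitable $\eta\in[0,1]$. Since the right-hand side is nonnegative and $h(x_t)\ge 0$ on $\mathcal{C}$, the binding case is the boundary $h(x_t)=0$, where $\hat n_t^T\Delta v_t = -\sqrt{a_{max}(\|\Delta p_t\|-D_s)}$ (the agents close at exactly the braking rate encoded in $h$); there I must show $\sup_u h(x_{t+1}) \ge 0$. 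I expect this to be the main obstacle: unlike the continuous-time derivation in \cite{Borrmann2015}, which differentiates $h$, here I must control a full discrete step and reconcile $\hat n_{t+1}$ with $\hat n_t$ as well as $\|\Delta p_{t+1}\|$ with $\|\Delta p_t\|$ through the nonlinear square root. My approach would be to use concavity of $\sqrt{\cdot}$ to upper-bound the change in the braking term by a multiple of the one-step change in $\|\Delta p\|$ (itself governed by $\hat n^T\Delta v$), and then to argue that the guaranteed acceleration $a_{max}$, which is precisely the constant appearing under the square root, compensates for the closing-velocity growth so that $h$ cannot cross below zero in one step. The boundary inequality then propagates to the interior with $\eta$ chosen accordingly, and compactness of $\mathcal{C}$ guarantees the infimum defining $a_{max}=\min_x[\sigma_{min}(g_v)u_{max}-\|\beta_v\|]$ is attained and positive.
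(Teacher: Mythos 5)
Your first two steps are sound and essentially coincide with the paper's own argument: exploiting $g_p(x) = 0$ so that only the closing-velocity term of $h(x_{t+1})$ depends on $u$, and the Cauchy--Schwarz/singular-value chain giving $\sup_{u \in \mathcal{U}} \hat{n}^T \big( \beta_v(x) + g_v(x) u \big) \geq \sigma_{min}(g_v(x)) u_{max} - \|\beta_v(x)\| \geq a_{max}$ is exactly the paper's second step (stated there as a guaranteed relative acceleration of at least $a_{max}$ in any direction $\hat{e}$). The genuine gap is your third step: you never prove the inequality you yourself flag as ``the main obstacle''; you only announce a plan for it (concavity of the square root, an unspecified compensation argument). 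That step is the actual content of the lemma, and the plan as sketched faces real difficulties: the map $r \mapsto \sqrt{a_{max}(r - D_s)}$ has unbounded slope as $\|\Delta p\| \downarrow D_s$, so a concavity bound on its one-step change in terms of the one-step change of $\|\Delta p\|$ degenerates exactly in the regime that matters; moreover, at boundary states whose closing speed is below one step's worth of deceleration ($0 < -\hat{n}_t^T \Delta v_t < a_{max}$), the position updates with the \emph{old} velocity, and a careful one-step computation shows the margin analysis is delicate precisely there. Your sketch addresses none of this.

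The paper avoids the one-step verification entirely. It argues invariance geometrically over the whole braking maneuver: $h(x) \geq 0$ says the distance margin $\|\Delta p\| - D_s$ exceeds $(\hat{n}^T \Delta v)^2 / a_{max}$, which is a conservative (constant-closing-speed) overestimate of the distance closed while braking to zero relative velocity at $a_{max}$ per step --- this conservatism is exactly why the discrete-time CBF \eqref{eq:MA_CBF} carries $a_{max}$ rather than the $2 a_{max}$ of the continuous-time CBF in \cite{Borrmann2015}. Combined with the per-step acceleration capability from your step two, this shows $\mathcal{C}$ is control invariant, and the CBF property then follows from the invariance characterization cited to \cite{Ames2017} (equivalently, take $\eta = 1$ in the CBF definition, which also makes your ``binding case is the boundary'' discussion unnecessary --- for $\eta < 1$ the required bound $(1-\eta)h(x_t)$ grows with $h(x_t)$, so the boundary is not obviously binding anyway). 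One further slip: you invoke compactness of $\mathcal{C}$ to get attainment and positivity of $\min_x \big[ \sigma_{min}(g_v(x)) u_{max} - \|\beta_v(x)\| \big]$, but $\mathcal{C}$ is unbounded (it contains all states with agents far apart), so pointwise positivity under Assumption \ref{assumption:act} does not by itself yield $a_{max} > 0$; like the paper, you must posit this uniform bound rather than derive it.
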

\begin{proof}
First, we must show that set $\mathcal{C}$ defined by expression (\ref{eq:MA_CBF}) is control invariant for the dynamics (\ref{eq:transition_dynamics}), given that the robot has acceleration authority in any direction of at least $a_{max}$ for all $x \in \mathcal{C}$. For this, we rely on the same proof structure in \cite{Borrmann2015}, with the main difference being that we have discrete-time (rather than continuous-time) dynamics. Let $\Delta \hat{v}(x_t)$ denote the component of velocity $v(x_t)$ \textit{in the direction of collision}.
\begin{equation}
    \Delta \hat{v}(x_t) = \frac{\Delta p ^T \Delta v}{\| \Delta p \|}
\end{equation}
We know that collision can be avoided if we can match the other agent's velocity (i.e. $\Delta \hat{v} = 0$) by the time we reach them. If we assume that we can accelerate by $a_{max}$ in any direction, we are guaranteed that we can achieve $\Delta \hat{v} = 0$ within time $T_c = \frac{-\Delta \hat{v}(x_t)}{a_{max}}$. In our discrete-time formulation, the following condition implies collision avoidance:
\begin{equation}
\begin{split}
    & \Delta \hat{v}(x_t) T_c + \| \Delta p \| \geq D_s, \\
    &  \Big( \frac{\Delta p ^T \Delta v}{\| \Delta p \|} \Big)^2 \leq a_{max} ( \| \Delta p \| - D_s).
\end{split}
\end{equation}
Note that this constraint is only active when two agents are moving closer to each other ($\Delta \hat{v} < 0$), and no constraint is needed when two agents are moving away from each other ($\Delta \hat{v} \geq 0$). Therefore, collision can always be avoided under the following condition,
\begin{equation}
    - \frac{\Delta p ^T \Delta v}{\| \Delta p \|}  \leq \sqrt{a_{max} ( \| \Delta p \| - D_s)}.
\end{equation}

Based on our geometric argument, we know that the set $\mathcal{C} :=  \{x \in \mathbb{R}^n: h(x) \geq 0 \}$ defined by $h := \frac{\Delta p ^T \Delta v}{\| \Delta p \|} + \sqrt{a_{max} ( \| \Delta p \| - D_s)}$ is control invariant. This implies that $h(x)$ is a discrete-time CBF \cite{Ames2017}, \textit{given that the robot can accelerate by at least $a_{max}$ in any direction}.

Therefore, our second step is to show that for all $x \in \mathcal{C}$ and any unit vector $\hat{e}$, it holds that $\sup_{u \in \mathcal{U}} ~ \| \big( ( v_{t+1}(x, u) - v^{(i)}_{t+1}(x) )  - ( v_t - v^{(i)}_t \big)^T \hat{e} \| \geq a_{max} > 0$. 
\begin{equation}
\begin{split}
& \sup_{u \in \mathcal{U}} ~ \| \big( ( v_{t+1}(x, u) - v^{(i)}_{t+1}(x) )  - ( v_t - v^{(i)}_t \big)^T \hat{e} \| \\
& ~~~~~~~~~~~ = \sup_{u \in \mathcal{U}} \| \big( \beta_v(x) + g_v(x) u \big)^T \hat{e} \| \\
& ~~~~~~~~~~~ = \sup_{u  \in \mathcal{U}} \| \hat{e}^T \beta_v(x) + \hat{e}^T g_v(x) u \| \\
& ~~~~~~~~~~~ \geq \sup_{u \in  \mathcal{U}} \| \hat{e}^T g_v(x) u \| - \| \hat{e}^T \beta_v(x) \| \\
& ~~~~~~~~~~~ \geq \sigma_{min}(g_v(x)) u_{max} - \| \beta_v(x) \| \\
& ~~~~~~~~~~~ \geq \min_{x} \Big[ \sigma_{min}(g_v(x)) u_{max} - \| \beta_v(x) \| \Big] = a_{max} \\
& ~~~~~~~~~~~ > 0 ~,
\end{split}
\end{equation}
where the last inequality follows directly from Assumption \ref{assumption:act}. Therefore, we are guaranteed that the robot can accelerate by at least $a_{max} > 0$ in any direction. Combined with the first part of the proof, this shows that the set $\mathcal{C}$ defined by \eqref{eq:MA_CBF} is a discrete-time CBF.
\end{proof}
\medskip
\noindent
\textbf{Incorporating Robustness into CBF:} While uncertainty in robot/environmental dynamics can be directly incorporated into the Control Barrier Condition (CBC) \textit{for simple systems/constraints} (e.g. linear CBFs) \cite{Cheng2019}, this is not the case for the multi-agent CBF with discrete-time dynamics. Unfortunately, uncertainty cannot be directly incorporated into the CBC while maintaining a quadratic program. 

Consider our CBF \eqref{eq:MA_CBF} and the dynamics defined in (\ref{eq:transition_dynamics}) and (\ref{eq:dynamics_agents}). Based on these, we can compute the following CBC with respect to each other agent $i$ as follows:

\begingroup
\small
\begin{equation}
\begin{aligned}
& CBC^{(i)}(x_t, u_t) = \Big< \frac{f_p(x_t) + g_p(x_t) u_t + d_p(x_t) - f_p^{(i)}(x_t) - d_p^{(i)}(x_t)}{\| f_p(x_t) + g_p(x_t) u_t + d_p(x_t) - f_p^{(i)}(x) - d_p^{(i)}(x_t) \|}, \\ \\
&  f_v(x_t) + g_v(x_t) u_t + d_v(x_t) - f_v^{(i)}(x_t) - d_v^{(i)}(x_t) \Big> + \\ \\
&  \sqrt{a_{max} (\| f_p(x_t) + g_p(x_t) u_t + d_p(x_t) - f_p^{(i)}(x_t) - d_p^{(i)}(x_t) \| - D_s) } ~ + \\
& (\eta - 1) \sqrt{a_{max} (\| \Delta p_t \| - D_s) } + (\eta - 1) \frac{\Delta p_t^T \Delta v_t}{\| \Delta p_t \| } .
\end{aligned}
\label{eq:robust_cbc}
\end{equation}
\endgroup

If we can \textbf{(a)} determine bounds on the dynamic uncertainties, $d$, in \eqref{eq:transition_dynamics} and \eqref{eq:dynamics_agents}, and \textbf{(b)} compute control actions that satisfy $CBC(x,u) \geq 0$ in an online fashion, then we can obtain robust safety guarantees utilizing the multi-agent CBF. Ideally, we could incorporate (\ref{eq:robust_cbc}) into an efficiently solvable program as follows,
\begin{equation}
\begin{aligned}
u = ~ & \underset{u_t \in \mathcal{U}}{\text{argmin}}
& & \| u - u_{des} \|_2 \\
& ~~~~ \text{s.t.}
& &   \underset{d(x_t)}{\text{min}} ~ CBC^{(i)}(x_t, u, d_t) \geq 0 ~~~ \forall ~ i = 1,...,N \\
& & & \text{where} ~ d(x_t) \in \mathcal{D} \\
& & & \| u \|_2 \leq u_{max} ~ .\\
\end{aligned}
\label{eq:optimization}
\end{equation}

\noindent
where $u_{des}$ is any, potentially unsafe, desired control action passed to our CBF (e.g. a linear MPC controller~\cite{singh2017robust}) and $\mathcal{D}$ is our bound on the uncertainty (to be further discussed in Section V). Note that the CBC constraint~\eqref{eq:robust_cbc} in (\ref{eq:optimization}) is clearly not linear nor convex. Therefore, the resulting program is non-convex and cannot be solved at high frequency for adequate safety assurances. However, recall that our system has relative degree 2, which allows us to derive the following bound,
\begin{equation}
CBC(x_t, u_t, d_t) \geq k_c(x_t) - H_1(x_t) d_t - u_t^T H_2(x_t) d_t - H_3(x_t) u_t,
\label{eq:CBC_bound}
\end{equation}
where the definitions of the terms ($k_c, H_1, H_2, H_3$) are given in~\eqref{eq:parameters} in the Appendix. We also move the derivation of the bound \eqref{eq:CBC_bound} to the Appendix due to space limitations. For the rest of the paper, we drop the index $i$ for notational convenience.

The following lemma allows us to utilize CBC bound \eqref{eq:CBC_bound} to obtain safety guarantees \textit{under polytopic uncertainties}.

\medskip
\begin{lemma}
\textit{Suppose the uncertainty in our dynamics $d$ is bounded in the polytope $\{d \in \mathbb{R}^n ~ | ~ Gd \leq g \}$. Then the action, $u$, obtained from solving the following optimization problem (\ref{eq:program_dual_robust}) robustly satisfies the CBC condition (\ref{eq:robust_cbc}) (i.e. renders the set $\mathcal{C}$ forward invariant).}
\begin{equation}
	\begin{aligned}
	& \underset{u \in \mathcal{U}, \xi \in \mathbb{R}_{+}^{4n}}{\text{min}} ~~ \| u - u_{des} \|_2 \\
	& \text{s.t.} ~~~~~~~~~~~~~~~ H_3(x_t) u + \xi g \leq k_c(x_t) \\
	& ~~~~~~~~~~~~~~~~~~ H_1(x_t) + u^T H_2(x_t) = \xi G \\
	& ~~~~~~~~~~~~~~~~~~ \xi \geq 0 \\
	& ~~~~~~~~~~~~~~~~~~ \| u \|_2 \leq u_{max} ~~~ (\textnormal{actuation limits}) ~ ,
	\end{aligned}
	\label{eq:program_dual_robust}
	\end{equation}
\label{lemma:robust}
\end{lemma}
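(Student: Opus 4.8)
The plan is to start from the relative-degree-2 bound \eqref{eq:CBC_bound}, which lower-bounds $CBC(x_t,u,d)$ by the expression $k_c(x_t) - H_1(x_t)d - u^T H_2(x_t)d - H_3(x_t)u$ that is \emph{affine} in the uncertainty $d$ for any fixed $u$. Robust satisfaction of \eqref{eq:robust_cbc} is then implied by forcing this lower bound to be nonnegative for every admissible $d$, i.e. by requiring
\begin{equation}
\min_{d\,:\,Gd\leq g}\Big[k_c(x_t) - \big(H_1(x_t)+u^T H_2(x_t)\big)d - H_3(x_t)u\Big]\geq 0.
\end{equation}
Collecting the $d$-dependent terms, this is equivalent to
\begin{equation}
\max_{d\,:\,Gd\leq g}\big(H_1(x_t)+u^T H_2(x_t)\big)d \;\leq\; k_c(x_t)-H_3(x_t)u ,
\end{equation}
so the obstacle is the inner maximization, which for fixed $u$ is a \emph{linear program} over the uncertainty polytope.

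Next I would dualize this inner LP. Writing $c(u):=H_1(x_t)+u^T H_2(x_t)$ for the row vector of $d$-coefficients, the LP $\max_{d:\,Gd\leq g} c(u)\,d$ has dual $\min_{\xi\geq 0,\;\xi G = c(u)}\,\xi g$. Weak LP duality then gives that \emph{any} dual-feasible $\xi$ — that is, $\xi\geq 0$ with $\xi G = H_1(x_t)+u^T H_2(x_t)$ — satisfies $\max_{d:\,Gd\leq g} c(u)\,d \leq \xi g$. Hence, if in addition $H_3(x_t)u + \xi g \leq k_c(x_t)$, the worst-case lower bound on $CBC$ is nonnegative and $CBC(x_t,u,d)\geq 0$ holds for every $d$ in the polytope. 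These three conditions are exactly the constraints of \eqref{eq:program_dual_robust}, so any $(u,\xi)$ feasible for \eqref{eq:program_dual_robust} yields a robustly safe action $u$. Moreover, since requiring the dual minimum over $\xi$ to meet the bound is the same as asking for the \emph{existence} of a feasible $\xi$ meeting it, the outer $\min_u$ and this $\xi$ can be merged into a single joint minimization over $(u,\xi)$; thus the min--max robust problem \eqref{eq:optimization} collapses into \eqref{eq:program_dual_robust}, whose objective $\|u-u_{des}\|_2$ is convex and whose constraints are linear in $(u,\xi)$ together with the norm ball $\|u\|_2\leq u_{max}$ — i.e. an efficiently solvable convex quadratic program.

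Finally, I would convert ``robust $CBC\geq 0$'' into forward invariance. Since $CBC(x_t,u,d)=h(x_{t+1})+(\eta-1)h(x_t)\geq 0$ gives $h(x_{t+1})\geq(1-\eta)h(x_t)$, and $\eta\in[0,1]$ together with Lemma \ref{lemma:accel} (so that $h$ is a valid discrete-time CBF) ensures $h(x_t)\geq 0 \Rightarrow h(x_{t+1})\geq 0$, a straightforward induction renders $\mathcal{C}$ forward invariant along the realized trajectory.

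The main obstacle is the LP-duality reformulation: one must verify that the bilinear term $u^T H_2(x_t)d$ leaves the inner problem \emph{linear in $d$} for fixed $u$ (which is precisely why the relative-degree-2 bound \eqref{eq:CBC_bound} was needed), and keep careful track of the transposes and dimensions of $H_1,H_2,H_3,G,g,\xi$ so that $\xi G$ and $H_1(x_t)+u^T H_2(x_t)$ are comparable row vectors. For soundness of the lemma, weak duality alone suffices; if one additionally wants \eqref{eq:program_dual_robust} to be an \emph{exact} (non-conservative) reformulation of \eqref{eq:optimization} rather than an inner approximation, I would invoke strong LP duality, which applies because the uncertainty polytope $\{d:Gd\leq g\}$ is nonempty and bounded, making the inner LP feasible with finite optimum.
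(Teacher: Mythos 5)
Your proposal is correct and takes essentially the same route as the paper: both reduce robust satisfaction of the CBC to the semi-infinite constraint implied by the lower bound \eqref{eq:CBC_bound}, and then eliminate the inner linear program over the polytope $\{d \in \mathbb{R}^n \,|\, Gd \leq g\}$ via LP duality with multiplier $\xi$, which yields exactly the constraints of \eqref{eq:program_dual_robust}. The only difference is level of detail: the paper asserts the primal--dual equivalence with a citation, whereas you carry out the dualization explicitly and correctly note that weak duality alone suffices for the safety direction of the lemma, with strong duality needed only for exactness of the reformulation.
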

\begin{proof}
	The robust optimization problem (\ref{eq:program_dual_robust}) can be equivalently represented by the following optimization problem (i.e. (\ref{eq:program_dual_robust}) is the dual to (\ref{eq:program_primal_robust}) with no duality gap \cite{Chen2019} where $\xi$ is the dual variable):
	\begin{equation}
	\begin{aligned}
	& \underset{u \in \mathcal{U}}{\text{min}} ~~ \| u - u_{des} \|_2 \\
	& \text{s.t.} ~~~~~~~~~~~~ \forall d \in \{ d \in \mathbb{R}^n ~ | ~ Gd \leq g \} \\
	& ~~~~~~~~~~~~~~~~ H_1(x_t) d + u^T H_2(x_t) d + H_3(x_t) u \leq k_c(x_t) ~ \\
	& ~~~~~~~~~~~~~~~~ \| u \|_2 \leq u_{max} ~~~ (\textnormal{actuation limits}) ~ ,
	\end{aligned}
	\label{eq:program_primal_robust}
	\end{equation}
	where $u$ is the decision vector, $d$ is the uncertainty variable, and $\{d \in \mathbb{R}^n ~ | ~ Gd \leq g\}$ is the uncertainty bound. If the inequality in (\ref{eq:program_primal_robust}) is satisfied such that $H_1 d + u^T H_2 d + H_3 u \leq k_c$, then it follows directly from (\ref{eq:CBC_bound}) that the CBC condition is satisfied for all $d$ in our polytopic uncertainty set. Therefore, the set $\mathcal{C}$ is rendered forward invariant.
\end{proof}

This lemma shows that if we bound $d(x_t)$ in a polytope, we can transform our robust multi-agent CBF (\ref{eq:optimization}) into a quadratic program (\ref{eq:program_dual_robust}), which gives us a computationally efficient way to provide robust guarantees of safety under robot and environment uncertainties. Hence, in the following section, we examine the problem of learning accurate polytopic bounds on the uncertainty $d$ in an online fashion.

\medskip
\section{LEARNING UNCERTAINTY BOUNDS}
\label{sec:5}
In this section, our goal is to learn accurate confidence supports for the uncertainties $d$ and $d^{(i)}$ (for all agents $i$) in an online manner, which will allow us to guarantee safety with high probability. To this end, we utilize Matrix-Variate Gaussian Processes which provide multivariate Gaussian distributions over the uncertainties, $d$ and $d^{(i)}$. 

Using \eqref{eq:transition_dynamics}, we have $d(x_t) = x_{t+1} - f(x_t) - g(x_t) u_t$. A similar relation holds based on \eqref{eq:dynamics_agents} for other agents. Thus, given a sequence of measurements ($x_t, u_t, x_{t+1}$) over a horizon $T$, we compute the uncertain variable, $d_{t-T}, ..., d_{t-1}$ over that horizon. Then, we infer a distribution over the query point, $d_t$ (i.e. next time point), as described in Equation \eqref{eq:GP_inf}.

\medskip
\noindent
\textbf{Learning Kernel Parameters:} Direct application of the MVG \eqref{eq:GP_inf} to our multi-agent setup will be problematic without first training the MVG model hyperparameters. This is easy to see by noting that the covariance, $\Sigma(X) \otimes \Omega$, does not depend on the observed values, $Y$. Furthermore, the coupling between uncertainties, captured by $\Omega$, is completely independent of our online measurements. Instead, much of the uncertainty prediction is baked into the kernel parameters, $\kappa(l, \sigma)$, and matrix $\Omega$. Thus, to obtain accurate estimates of $d$, we must learn MVG model parameters offline from data. In other words, some agents might behave predictably and others might behave more erratically, and hyperparameter optimization is necessary to capture these uncertainty profiles in our Bayesian inference.

Based on the probability density function \eqref{eq:MVGP}, we obtain the negative log-likelihood of a given set of training data $X$,
\begin{equation}
\begin{split}
L(\bfX, & ~ \bfY ; ~ K, \Omega) = -\ln p(\bfX, ~ \bfY; K, \Omega) = \\
& \frac{N n}{2} \ln(2 \pi) + \frac{n}{2} \ln |K| + \frac{N}{2} \ln |\Omega| + \frac{1}{2} \tr[(K)^{-1} Y \Omega^{-1} Y^T]
\end{split}
\end{equation}
which we optimize (over $\Sigma, \Omega$) using Stochastic Gradient Descent (see hyperparameter optimization in Fig.~\ref{fig:block_diagram}) \cite{Crepey2019}. Recall that $N$ denotes the number of training samples in our batch, and $n$ denotes the dimension of the output $Y$ (i.e. $d(x)$). We run the optimization several times with different initializations to decrease our chance of getting stuck in poor local optima. The gradient update expressions are shown in Equation \eqref{eq:gradients} below. Note that we use projected gradient updates for $\Omega$, in order to enforce the condition that $\Omega$ must be positive definite.

\begin{equation}
\begin{split}
    & \frac{dL}{dl} = \frac{n}{2} \tr \Big( K^{-1} \frac{dK}{dl} \Big) + \frac{1}{2} \tr \Big( -K^{-1} \frac{dK}{dl} K^{-1} Y \Omega^{-1} Y^T \Big) \\ \\
    & \frac{dL}{d\sigma} = \frac{n}{2} \tr \Big( K^{-1} \frac{dK}{d\sigma} \Big) + \frac{1}{2} \tr \Big( -K^{-1} \frac{dK}{dl} K^{-1} Y \Omega^{-1} Y^T \Big) \\ \\
    & \frac{dL}{d\Omega} = \frac{N}{2} \Omega^{-1} - \frac{1}{2} \Omega^{-1} Y^T K^{-1} Y \Omega^{-1}
\end{split}
\label{eq:gradients}
\end{equation}

\medskip
\noindent
\textbf{Converting GP Uncertainty to Polytopic Bound:} After learning the kernel parameters, we can obtain the mean, $\mu_d = \hat{M}$, and variance, $\Sigma_d = \hat{\Sigma} \otimes \hat{\Omega}$, from data observed online based on the multivariate Gaussian Process \eqref{eq:GP_inf}. Then, the uncertainties should follow the distribution,
\begin{equation}
(d - \mu_d)^T \Sigma^{-1}_d (d - \mu_d) \sim \chi^2_N ,
\label{eq:chi_dist}
\end{equation}

\noindent
where $\chi^2_N$ represents the chi-squared distribution with N degrees of freedom (equal to dimension of $d$). This allows us to obtain the confidence support,
\begin{equation}
(d - \mu_d)^T \Sigma^{-1}_d (d - \mu_d) \leq k_{\delta} ~~ \textnormal{with probability} ~~ 1-\delta.
\label{eq:chi_interval}
\end{equation}

However, this set defines an \textit{ellipsoid} over $d$ rather than a polytope, which we require for the robust optimization; while we could directly utilize the ellipsoidal constraint, this would not lead to an efficiently solvable QP. To obtain a polytope, we compute the minimum bounding box surrounding the uncertainty ellipsoid. 
\medskip
\begin{lemma}
\textit{Suppose our robot/environment uncertainty can be described by our MVG model (described by the distribution (\ref{eq:chi_dist})). With probability $1-\delta$, the following polytopic bound on the uncertainty $d$ holds:}
\begin{equation}
-\sqrt{k_{\delta} \lambda_i} + \upsilon_i^T \mu_d \leq \upsilon_i^T d \leq \sqrt{k_{\delta} \lambda_i} + \upsilon_i^T \mu_d,
\label{eq:poly_bound}
\end{equation}
\textit{where $\upsilon_i$ and $\lambda_i$ represent the eigenvectors and eigenvalues of $\Sigma_d$, respectively.}
\label{lemma:MVG}
\end{lemma}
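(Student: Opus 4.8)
The plan is to prove that the confidence ellipsoid from \eqref{eq:chi_interval} is contained in the axis-aligned (in the eigenbasis of $\Sigma_d$) bounding box whose faces are the inequalities in \eqref{eq:poly_bound}, and then transfer the probability $1-\delta$ from the ellipsoid to the box by set inclusion. Concretely, the high-confidence set is the ellipsoid $E := \{d : (d-\mu_d)^T \Sigma_d^{-1}(d-\mu_d) \leq k_\delta\}$, and I would show $E \subseteq P$, where $P$ is the polytope defined by \eqref{eq:poly_bound}; since $\Pr(d \in E) = 1-\delta$ by \eqref{eq:chi_dist}, the inclusion immediately yields $\Pr(d \in P) \geq 1-\delta$.

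First I would invoke the spectral theorem. As $\Sigma_d$ is a covariance matrix it is symmetric positive definite (positive definiteness is already implicit in \eqref{eq:chi_dist}, which uses $\Sigma_d^{-1}$), so it admits the decomposition $\Sigma_d = V \Lambda V^T$ with $V = [\upsilon_1, \dots, \upsilon_n]$ orthonormal and $\Lambda = \mathrm{diag}(\lambda_1,\dots,\lambda_n)$, $\lambda_i > 0$. This also makes the quantities $\sqrt{k_\delta \lambda_i}$ in \eqref{eq:poly_bound} well defined and gives $\Sigma_d^{-1} = V \Lambda^{-1} V^T$.

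Next I would change coordinates by setting $y := V^T(d - \mu_d)$, whose $i$-th component is precisely $y_i = \upsilon_i^T(d-\mu_d)$, the quantity appearing (after shifting by $\upsilon_i^T\mu_d$) in \eqref{eq:poly_bound}. Substituting into the ellipsoid inequality and using orthonormality of $V$ turns $E$ into $\{y : \sum_{j=1}^{n} y_j^2/\lambda_j \leq k_\delta\}$. The one substantive step is then the observation that every summand is nonnegative, so for each fixed index $i$ we have $y_i^2/\lambda_i \leq \sum_{j} y_j^2/\lambda_j \leq k_\delta$, hence $|y_i| \leq \sqrt{k_\delta \lambda_i}$. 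Writing $y_i = \upsilon_i^T d - \upsilon_i^T \mu_d$ and rearranging reproduces \eqref{eq:poly_bound} exactly, establishing $E \subseteq P$.

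I do not expect a serious obstacle here: the argument is elementary once the spectral decomposition is in place, and the only subtlety is the direction of the probability inequality. Because the bounding box contains the ellipsoid, the probability of lying in the box is \emph{at least} $1-\delta$ (the stated ``with probability $1-\delta$'' should be read as a guarantee of at least $1-\delta$), so the conservativeness introduced by replacing the ellipsoid with its bounding box never weakens the safety guarantee. If one wanted to confirm that this is the \emph{minimum} bounding box referenced in the text, one would check that each face is attained, namely at $d = \mu_d \pm \sqrt{k_\delta\lambda_i}\,\upsilon_i$, though this tightness is not needed for the probabilistic guarantee itself.
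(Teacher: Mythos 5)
Your proposal is correct and follows essentially the same route as the paper's proof: eigendecomposition of $\Sigma_d$, rewriting the $\chi^2$ confidence ellipsoid in the eigenbasis, and bounding each nonnegative summand $[\upsilon_i^T(d-\mu_d)]^2/\lambda_i$ by the full sum to obtain the per-face inequalities. Your explicit set-inclusion framing ($E \subseteq P$, hence $\Pr(d \in P) \geq 1-\delta$) and your remark that the guarantee should read ``at least $1-\delta$'' are just cleaner statements of what the paper does implicitly.
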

\medskip
\begin{proof}
Since $\Sigma_d$ is a positive symmetric covariance matrix, the eigendecomposition of $\Sigma_d$ exists, and we have $\Sigma_d = \Psi^T \Lambda \Psi$, where $\Lambda$ is a diagonal matrix containing positive eigenvalues of $\Sigma_d$ and $\Psi$ is the orthogonal eigenvector matrix. Thus,~\eqref{eq:chi_interval} can be rewritten as follows:
\begin{equation}
\begin{split}
& \Big[ \Psi^T(d - \mu_d) \Big]^T \Lambda^{-1} \Big[ \Psi^T (d - \mu_d) \Big] \leq k_{\delta} ~~ \textnormal{w.p.} ~~ 1-\delta. \\
\end{split}
\end{equation}
 We can bound the left hand side:
\begin{equation}
\begin{split}
& \Big[ \Psi^T(d - \mu_d) \Big]^T \Lambda^{-1} \Big[ \Psi^T (d - \mu_d) \Big] \\
& ~~ = \sum_i \Big[ \upsilon_i^T(d - \mu_d) \Big]^T \lambda_i^{-1} \Big[ \upsilon_i^T (d - \mu_d) \Big] \\
& ~~ \geq \Big[ \upsilon_i^T(d - \mu_d) \Big]^T \lambda_i^{-1} \Big[ \upsilon_i^T (d - \mu_d) \Big] ~~ \forall ~~ i = 1,...,N
\end{split}
\end{equation}
where $\upsilon_i$ represent the eigenvectors of $\Sigma_d$ contained in $\Psi$, and $\lambda_i$ are the eigenvalues of $\Sigma_d$ contained in $\Lambda$. We can then conclude that with probability $1-\delta$, the following relations hold giving us our polytopic bound,
\begin{equation}
\begin{split}
& \Big[ \upsilon_i^T(d - \mu_d) \Big]^T \lambda_i^{-1} \Big[ \upsilon_i^T (d - \mu_d) \Big] \leq k_{\delta} ~~ \forall ~~ i = 1,...,N \\ \\
& -\sqrt{k_{\delta} \lambda_i} + \upsilon_i^T \mu_d \leq \upsilon_i^T d \leq \sqrt{k_{\delta} \lambda_i} + \upsilon_i^T \mu_d ~~ \textnormal{for} ~~ i = 1,...,N
\end{split}
\end{equation}
\end{proof}
\begin{remark}
Recall from Section III.B that in deriving our uncertainty bounds using the Matrix-Variate Gaussian Process, we rely on the assumption that the uncertainties $(d_1, ..., d_N)$ are distributed according to a multivariate Gaussian. While this a strong assumption that may not be valid in general ~\cite{lederer2019uniform}, it can provide a good approximation of agent behavior in many cases. As an alternative, if the uncertainty belongs to Reproducing Kernel Hilbert Space (RKHS) the high confidence bounds developed in~\cite{Safe_mbrl_nips17} could be used.
\oprocend
\end{remark}
\medskip
\noindent
\textbf{High-Confidence Safety Guarantee:} Combining the uncertainty bound on $d$ with our result from Sec.~\ref{secrefiVw} leads us to the main result, summarized in the following Theorem.
\medskip
\begin{theorem}
	\textit{Using the polytopic bounds (\ref{eq:poly_bound}), the control action obtained from the quadratic program (\ref{eq:program_dual_robust}) guarantees robust safety (i.e. collision avoidance between agents) with probability at least $1-\delta$.}
\end{theorem}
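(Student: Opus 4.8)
The plan is to prove the theorem by composing the three preceding lemmas under a single conditioning argument: Lemma~\ref{lemma:MVG} supplies a polytopic over-approximation of the true uncertainty that is valid with probability at least $1-\delta$, and on the event that this over-approximation holds, Lemmas~\ref{lemma:robust} and~\ref{lemma:accel} yield safety \emph{deterministically}. The only probabilistic ingredient is the uncertainty bound itself; everything downstream is a worst-case guarantee that is agnostic to where inside the polytope the realized $d$ lies.

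First I would make explicit the passage from the eigenvector-aligned bounds~\eqref{eq:poly_bound} to the matrix form $\{d \in \mathbb{R}^n \mid Gd \leq g\}$ required by Lemma~\ref{lemma:robust}. Each two-sided inequality in~\eqref{eq:poly_bound} contributes two rows: stacking $\upsilon_i^\top$ and $-\upsilon_i^\top$ into $G$, and the corresponding scalars $\sqrt{k_\delta \lambda_i} + \upsilon_i^\top \mu_d$ and $\sqrt{k_\delta \lambda_i} - \upsilon_i^\top \mu_d$ into $g$, realizes the axis-aligned bounding box of the confidence ellipsoid~\eqref{eq:chi_interval} as a polytope. By Lemma~\ref{lemma:MVG}, the true uncertainty then satisfies $Gd \leq g$ with probability at least $1-\delta$, which is exactly the hypothesis under which Lemma~\ref{lemma:robust} operates.

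Next, conditioning on this event, I would invoke Lemma~\ref{lemma:robust}: the control $u$ returned by the quadratic program~\eqref{eq:program_dual_robust} satisfies $H_1 d + u^\top H_2 d + H_3 u \leq k_c$ for \emph{every} $d$ in the polytope, hence in particular for the realized uncertainty. The lower bound~\eqref{eq:CBC_bound} then gives $\mathrm{CBC}(x_t,u_t,d_t) \geq 0$, i.e. the control barrier condition~\eqref{eq:robust_cbc} holds. Finally, Lemma~\ref{lemma:accel} established that $h$ in~\eqref{eq:MA_CBF} is a valid discrete-time CBF, so satisfaction of the CBC renders $\mathcal{C} = \{x \mid h(x) \geq 0\}$ forward invariant; since $\mathcal{C}$ is precisely the set of states from which collision can still be avoided given the robot's acceleration authority, forward invariance \emph{is} collision avoidance. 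Chaining these implications, safety holds on the conditioning event, and that event occurs with probability at least $1-\delta$.

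I expect the delicate point to be the probabilistic bookkeeping rather than any single algebraic step. The bound of Lemma~\ref{lemma:MVG} is re-derived at each timestep from freshly observed data and is applied per interacting agent after the index $i$ is dropped, so a literal union bound over all timesteps and all agents would erode the confidence below $1-\delta$. I would therefore state the guarantee as holding at each decision step, equivalently conditioned on the learned MVG support being valid, which is the sense in which forward invariance lifts the per-step constraint satisfaction to trajectory-level safety, and I would note that a uniform-in-time statement would instead require the stronger RKHS-type high-confidence bounds mentioned in the remark following Lemma~\ref{lemma:MVG}.
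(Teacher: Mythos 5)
Your proposal is correct and follows essentially the same route as the paper's own proof: Lemma~\ref{lemma:MVG} places the realized uncertainty in the polytope $\{d \in \mathbb{R}^n \mid Gd \leq g\}$ with probability at least $1-\delta$, Lemma~\ref{lemma:robust} together with the bound~\eqref{eq:CBC_bound} then gives $CBC(x,u,d) \geq 0$ deterministically on that event, and forward invariance of $\mathcal{C}$ yields collision avoidance. Your closing observation that the $1-\delta$ guarantee is per decision step (and that a uniform-in-time statement would need a union bound or RKHS-type bounds) is a refinement the paper's proof silently glosses over, not a deviation from its argument.
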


\medskip
\begin{proof}
We can represent (\ref{eq:poly_bound}) in the form $\{Gd \leq g \}$; therefore, with probability $1-\delta$, the uncertainty $d$ is contained in the set $\{ d \in \mathbb{R}^n | Gd \leq g \}$ (by Lemma \ref{lemma:MVG}). From Lemma \ref{lemma:robust} and Equation (\ref{eq:CBC_bound}), if we solve the quadratic program (\ref{eq:program_dual_robust}), we are guaranteed that $CBC(x, u, d) \geq k_c - H_1 d - u^T H_2 d - H_3 u \geq 0$ for all $d \in \{ d \in \mathbb{R}^n | Gd \leq g \}$. Therefore, the CBF condition is satisifed with probability $1-\delta$, so safety is guaranteed with probability at least $1-\delta$ (by Definition 1 and the forward invariance property of CBFs \cite{Ames2014}). 
\end{proof}

\medskip
\section{RESULTS}
\label{sec:6}

\subsection{Navigation in Unstructured Environment}
\label{sec:simulnavig}

We test our algorithm in a simulated multi-agent environment in which our robot, with nonlinear dynamics satisfying Assumption \ref{assumption:act}, navigates from a start to goal position while avoiding collisions, in the presence of a random number of other agents (3-12 agents). Each of the other agents has a randomized (unknown) goal that they try to reach. Approximately half of them blindly travel from their start to goal position without accounting for others, while the other half exhibit some collision avoidance behavior through their own control barrier functions (with random CBF parameters). An example simulation instance is shown in Fig. \ref{fig:markov_chain_fig}. See the code (referenced below) for further simulation details/parameters and agent dynamics.

We simulate several instances of the other agents moving and interacting, and use this data for hyperparameter optimization of an MVG model as described in Sec.~\ref{sec:5}. We then equip our robot with the robust CBF described in Sec.~\ref{secrefiVw}, using the optimized MVG for uncertainty prediction. 

By running $1000$ simulated tests in randomized environments, we show that the robust CBF avoids collision in $98.5\%$ of cases (when we set $\delta = 0.05$ [cf.~\eqref{eq:chi_interval}]), performing much better than the nominal multi-agent CBF (cf.~\cite{Borrmann2015}), which avoids collisions in $85.0\%$ of cases. The simulation results are summarized in Table \ref{table:sim_results}. 

\begin{table}[h]
\centering
\renewcommand{\arraystretch}{1.3}
\normalsize
\begin{tabular}{ |P{2.0cm}||P{2.6cm}|P{2.6cm}|}
	\hline
	~  & Robust Multi-Agent CBF & Nominal Multi-Agent CBF \\
	\hline
	\hline
	Collision Rate   & 1.5 \%  & 15.0\% \\
	\hline
	Distance to Collision &  $7.4 \pm 2.3$ & $7.3 \pm 2.1$ \\
	\hline
\end{tabular}
\caption{~Performance statistics for the robust vs nominal multi-agent CBF across 1000 randomized trials. For fair comparison, the robust and nominal CBFs were tested in the same randomized 1000 trials. \textbf{Collision Rate:} Percentage of trials that ended in collision. \textbf{Distance to Collision:} For trials without collision, the robot's margin from collision. The closer the robust CBF is to the nominal CBF, the less conservativeness is introduced by the uncertainty prediction.}
\label{table:sim_results}
\end{table}

Robustness must always come at the cost of performance (e.g. we can reach the goal faster if we do not care about collisions). To investigate the conservativeness of our approach, we looked at the uncertainty predictions of the MVG; Fig. \ref{fig:uncertainty} shows the uncertainty ellipse (over the 4-dimensional disturbance, $d$) projected onto the two velocity dimensions, as well as the true disturbances, $d_v$. We found that ($\approx 97\%, ~ 99\%)$ of disturbances, $d$, were within the $(2,~3) \sigma$ confidence ellipsoid, respectively, in line with the expectations of the MVG model.
Furthermore, the results in Table \ref{table:sim_results} show that the robust CBF only introduces slight conservativeness, as the margin from collision (in instances where the CBF was active) was very similar when utilizing the robust CBF vs. the nominal CBF. This suggests that the MVG model does well at modeling the uncertainties.

\begin{figure}[h]
\begin{center}
\includegraphics[width=0.45\textwidth]
{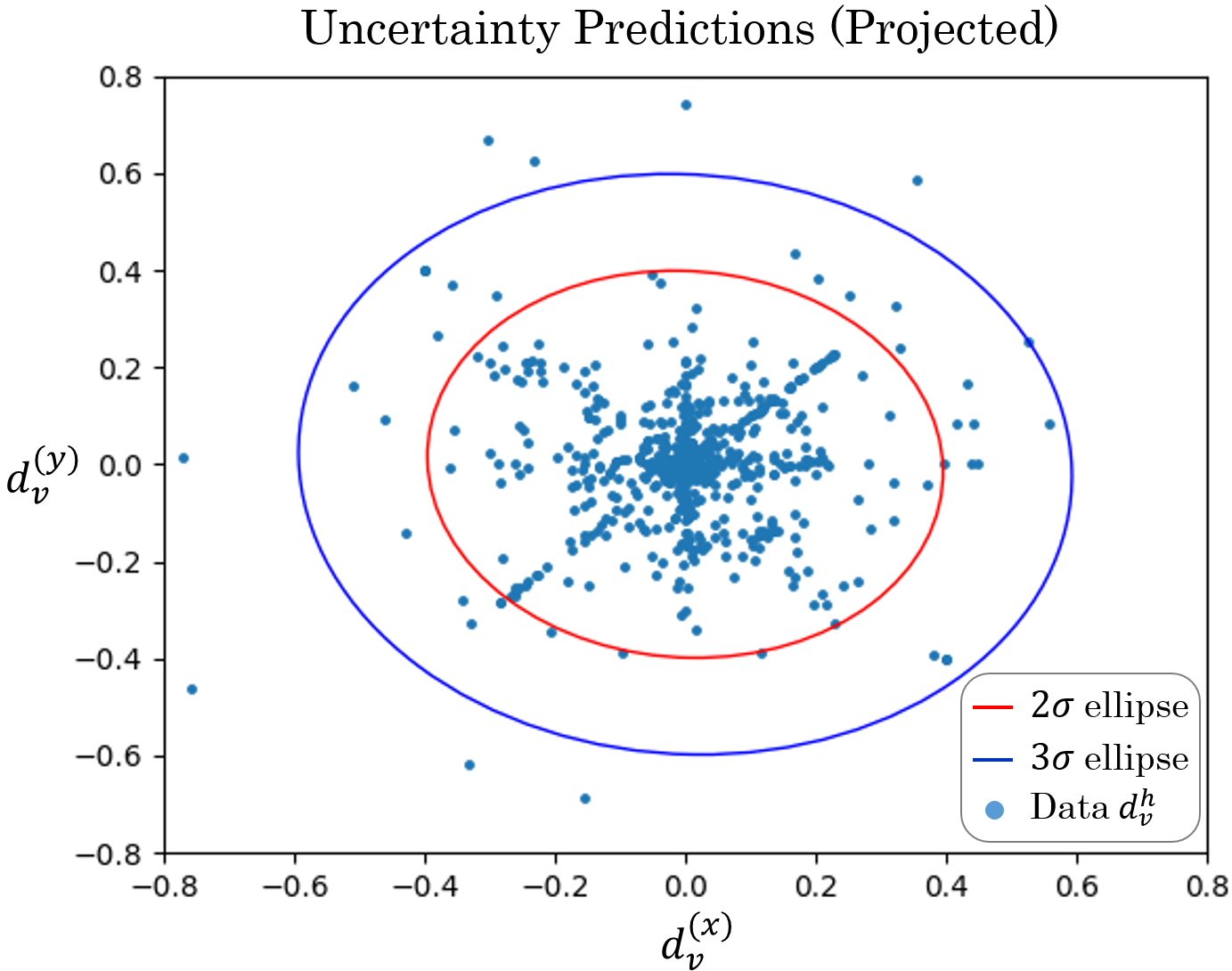}
\end{center}
\caption{~The normalized $2 \sigma$ (red) and $3 \sigma$ (blue) uncertainty ellipsoids over the other agents' dynamics, $d^h$, projected onto the velocity dimensions, $d^h_v$. The true disturbances over 1000 time steps (across different trials) are plotted as the blue dots. We found that the percentage of disturbances within the $2 \sigma$ and $3 \sigma$ uncertainty ellipsoids were consistent with expectations based on the MVG model. Note that the uncertainty ellipse is state-dependent, so we normalize the ellipsoid for each point, $(x^h,d^h_v)$, for fair comparison. }
\label{fig:uncertainty}
\end{figure}

The code for implementing the robust multi-agent CBF in our simulated environment can be found at \texttt{ https://github.com/rcheng805/robust\char`_cbf}. A video of the simulations can be found at \texttt{https://youtu.be/hXg5kZO86Lw}.

\section{CONCLUSION}

Robot navigation in unstructured environments with humans must be safe, but such environments are fraught with uncertainty due to the unpredictability of agents. In this work, we have introduced a robust multi-agent control barrier formulation, which guarantees safety with high probability in the presence of multiple uncontrolled, uncertain agents. We learn uncertainties online for the agents in the environment using Matrix-Variate Gaussian Processes, and design our CBF to be robust to the learned uncertainties. 

Future work will look at learning and designing safe controllers for a larger class of uncertainties, uncaptured by our Matrix Variate GP. Particularly, many agents in the real world exhibit multi-modal uncertainties, and it will be important for us to design safe, robust controllers for such uncertainties.




\bibliographystyle{IEEEtran}
\bibliography{references}

\newpage

\appendices
\section{Parameters of the Control Barrier Condition}
\label{appencbcfo}

Here we define the terms used in the lower bound of the Control Barrier Condition (CBC) in \eqref{eq:CBC_bound}:

\begin{equation}
\begin{split}
& H_1 (1 \times P) = \Big[ -\frac{f_v(x) - f_v^h(x)}{\| f_p(x) - f_p^h(x) \| - \zeta_p(x) - \zeta_p^h(x)}, ~~~ \\ \\
& ~~~~~~~~~~~~~~~~~ -\frac{f_p(x) - f_p^h(x)}{\| f_p(x) - f_p^h(x) \| - \zeta_p(x) - \zeta_p^h(x)}, \\ \\
& ~~~~~~~~~~~~~~~~~ \frac{f_v(x) - f_v^h(x)}{\| f_p(x) - f_p^h(x) \| - \zeta_p(x) - \zeta_p^h(x)}, \\ \\
& ~~~~~~~~~~~~~~~~~ \frac{f_p(x) - f_p^h(x)}{\| f_p(x) - f_p^h(x) \| - \zeta_p(x) - \zeta_p^h(x)}  \Big] \\ \\
& H_2 (M \times P) = \Big[ -\frac{g_v(x)}{\| f_p(x) - f_p^h(x) \| - \zeta_p(x) - \zeta_p^h(x)} ~, ~ 0 ~, \\ \\
& ~~~~~~~~~~~~~~~~~ \frac{g_v(x)}{\| f_p(x) - f_p^h(x) \| - \zeta_p(x) - \zeta_p^h(x)} ~, ~~~ 0 ~~ \Big] \\ \\
& H_3 (1 \times M) = \Big[ -\frac{(f_p(x) - f_p^h(x))^T g_v(x)}{\| f_p(x) - f_p^h(x) \| + \zeta_p(x) + \zeta_p^h(x) } ~ \Big] \\ \\
& k_c ~ = ~ \min \Big( \frac{(f_p(x) - f_p^h(x))^T (f_v(x) - f_v^h(x))}{\| f_p(x) - f_p^h(x) \| ~ \pm ~ \big( \zeta_p(x) + \zeta_p^h(x) \big)} \Big) +  \\ \\
 & ~~~~~~~ \sqrt{a_{max} (\| f_p(x) - f_p^h(x) \| - \zeta_p(x) - \zeta_p^h(x) - D_s) } ~ + \\ \\
 & ~~~~~~~ (\eta - 1) \sqrt{a_{max} (\| \Delta p_t \| - D_s) } ~ + ~ (\eta - 1) \frac{\Delta p_t^T \Delta v_t}{\| \Delta p_t \| } ~ - \\ \\
 & ~~~~~~~ \frac{\zeta_p(x) \zeta_v(x) + \zeta_p(x) \zeta_v^h(x) + \zeta_v^h(x) \zeta_p^h(x) + \zeta_v(x) \zeta_p^h(x)}{\| f_p(x) - f_p^h(x) \| - \zeta_p(x) - \zeta_p^h(x)}
\end{split}
\label{eq:parameters}
\end{equation}

\newpage
\onecolumn
\section{proof of CBC lower bound}
In this section, we prove that the lower bound defined in \eqref{eq:CBC_bound} holds. We begin by expanding out the full CBC condition in  \eqref{eq:robust_cbc}, using our assumption of a relative degree $2$ system, we reach the following expression:
\medskip
\begin{equation*}
\begin{split}
& CBC(x, u, d) = ~~ \frac{(f_p(x) - f_p^h(x))^T (f_v(x) - f_v^h(x))}{\| f_p(x) + d_p(x) - f_p^h(x) - d_p^h(x) \|} + \sqrt{a_{max} (\| f_p(x) + d_p(x) - f_p^h(x) - d_p^h(x) \| - D_s) } ~ + \\
& ~~~~~ (\gamma - 1) \sqrt{a_{max} (\| \Delta p \| - D_s) } + (\gamma - 1) \frac{\Delta p^T \Delta v}{\| \Delta p \| }  ~ + ~ \Big[ ~~ \frac{(f_p(x) - f_p^h(x))^T g_v(x)}{\| f_p(x) + d_p(x) - f_p^h(x) - d_p^h(x) \|} , ~~~ \\
& ~~~~~ \frac{f_v(x) - f_v^h(x)}{\| f_p(x) + d_p(x) - f_p^h(x) - d_p^h(x) \|}, ~~~ \frac{f_p(x) - f_p^h(x)}{\| f_p(x) + d_p(x) - f_p^h(x) - d_p^h(x) \|}, ~~~ -\frac{f_v(x) - f_v^h(x)}{\| f_p(x) + d_p(x) - f_p^h(x) - d_p^h(x) \|}, ~~~ \\
& ~~~~~ -\frac{f_p(x) - f_p^h(x)}{\| f_p(x) + d_p(x) - f_p^h(x) - d_p^h(x) \|} ~~ \Big] ~ \times \Big[ ~ u_R ~ , ~~ d_p ~ , ~~ d_v ~, ~~ d_p^h ~~, ~~ d_v^h ~ \Big]^T ~~ + \\
& ~~~~~ ~ u_R^T~  \Big[ \frac{g_v(x)}{\| f_p(x) + d_p(x) - f_p^h(x) - d_p^h(x) \|} ~, ~~~ 0 ~~, ~~ \frac{-g_v(x)}{\| f_p(x) + d_p(x) - f_p^h(x) - d_p^h(x) \|} ~, ~~~ 0 ~~ \Big] ~ \times \\
& ~~~~~ \Big[ ~ d_p ~ , ~~ d_v ~ , ~~ d_p^h ~~ , ~~ d_v^h ~ \Big]^T  ~~ + ~~ \Big[ ~ 0 ~, ~~ \frac{1}{\| f_p(x) + d_p(x) - f_p^h(x) - d_p^h(x) \|} ~ , ~~\frac{-1}{\| f_p(x) + d_p(x) - f_p^h(x) - d_p^h(x) \|} ~ , ~~ \\
& ~~~~~ \frac{1}{\| f_p(x) + d_p(x) - f_p^h(x) - d_p^h(x) \|} ~ , ~~ \frac{-1}{\| f_p(x) + d_p(x) - f_p^h(x) - d_p^h(x) \|} \Big] \times \Big[ ~ u_R^T u_R ~ , ~~ d_p^T d_v ~, ~~ d_p^T d_v^h ~ , ~~ d_v^h d_p^h ~ , ~~ d_v^T d_p^h ~ \Big]^T 
\end{split}
\end{equation*}

\bigskip
By bounding the positional uncertainty terms $\|d_p(x)\| \leq \zeta_p(x)$ and $\|d_p^h(x)\| \leq \zeta_p^h(x)$, we obtain a lower bound on $CBC(x, u, d)$:
\medskip
\begin{equation*}
\begin{split}
& CBC(x) \geq ~~ \min \Big( \frac{(f_p(x) - f_p^h(x))^T (f_v(x) - f_v^h(x))}{\| f_p(x) - f_p^h(x) \| ~ \pm ~ \big( \zeta_p(x) + \zeta_p^h(x) \big)} \Big) + \sqrt{a_{max} (\| f_p(x) - f_p^h(x) \| - \zeta_p(x) - \zeta_p^h(x) - D_s) } ~ + \\ \\
& ~~~~~ (\gamma - 1) \sqrt{a_{max} (\| \Delta p \| - D_s) } + (\gamma - 1) \frac{\Delta p^T \Delta v}{\| \Delta p \| }  ~ + \Big[ ~ \frac{(f_p(x) - f_p^h(x))^T g_v(x)}{\| f_p(x) - f_p^h(x) \| + \zeta_p(x) + \zeta_p^h(x) } , ~~ \frac{f_v(x) - f_v^h(x)}{\| f_p(x) - f_p^h(x) \| - \zeta_p(x) - \zeta_p^h(x)}, ~~~ \\ \\
& ~~~~~ \frac{f_p(x) - f_p^h(x)}{\| f_p(x) - f_p^h(x) \| - \zeta_p(x) - \zeta_p^h(x)}, ~~ -\frac{f_v(x) - f_v^h(x)}{\| f_p(x) - f_p^h(x) \| - \zeta_p(x) - \zeta_p^h(x)}, ~~ -\frac{f_p(x) - f_p^h(x)}{\| f_p(x) - f_p^h(x) \| - \zeta_p(x) - \zeta_p^h(x)} ~~ \Big] ~ * \\ \\
& ~~~~~ \Big[ ~ u_R ~ , ~ d_p ~ , ~ d_v ~, ~ d_p^h ~, ~ d_v^h ~ \Big]^T  + ~  u_R^T~  \Big[ \frac{g_v(x)}{\| f_p(x) - f_p^h(x) \| - \zeta_p(x) - \zeta_p^h(x)} ~, ~ 0 ~, ~ \frac{-g_v(x)}{\| f_p(x) - f_p^h(x) \| - \zeta_p(x) - \zeta_p^h(x)} ~, ~ 0 ~ \Big] ~ * \\ \\ 
& ~~~~~ \Big[ ~ d_p ~ , ~~ d_v ~ , ~~ d_p^h ~~ , ~~ d_v^h ~ \Big]^T  ~~ - ~~ \frac{\zeta_p(x) \zeta_v(x)}{\| f_p(x) - f_p^h(x) \| - \zeta_p(x) - \zeta_p^h(x)} ~ - \frac{\zeta_p(x) \zeta_v^h(x)}{\| f_p(x) - f_p^h(x) \| - \zeta_p(x) - \zeta_p^h(x)} ~ - ~~ \\ \\ 
& ~~~~~ \frac{\zeta_v^h(x) \zeta_p^h(x)}{\| f_p(x) - f_p^h(x) \| - \zeta_p(x) - \zeta_p^h(x)} ~ - ~ \frac{\zeta_v(x) \zeta_p^h(x)}{\| f_p(x) - f_p^h(x) \| - \zeta_p(x) - \zeta_p^h(x)} 
\end{split}
\end{equation*}

\bigskip
Grouping the terms, this can be written in simplified form using the parameters defined in Appendix \ref{appencbcfo}:
\begin{equation*}
CBC(x, u, d) \geq k_c(x) - H_1(x) \textbf{d} - \textbf{u}^T H_2(x) \textbf{d}  - H_3(x) \textbf{u}
\end{equation*}

\end{document}